\documentclass[english]{article}
\usepackage[T1]{fontenc}
\usepackage[latin9]{inputenc}
\usepackage{amsmath}
\usepackage{amsthm}
\usepackage{amssymb}
\usepackage{graphicx}
\usepackage{geometry}
\makeatletter
\theoremstyle{definition}
 \newtheorem{example}{\protect\examplename}
\theoremstyle{plain}
\newtheorem{assumption}{\protect\assumptionname}
\theoremstyle{remark}
\newtheorem{rem}{\protect\remarkname}
\theoremstyle{plain}
\newtheorem{thm}{\protect\theoremname}

\ifdefined\showcaptionsetup
 \PassOptionsToPackage{caption=false}{subfig}
\fi
\usepackage{subfig}
\makeatother

\usepackage{babel}
\providecommand{\assumptionname}{Assumption}
\providecommand{\examplename}{Example}
\providecommand{\remarkname}{Remark}
\providecommand{\theoremname}{Theorem}

\begin{document}
\title{Polynomial Speedup in Diffusion Models\\
with the Multilevel Euler-Maruyama Method}
\author{Arthur Jacot}
\maketitle
\begin{abstract}
We introduce the Multilevel Euler-Maruyama (ML-EM) method compute
solutions of SDEs and ODEs using a range of approximators $f^{1},\dots,f^{k}$
to the drift $f$ with increasing accuracy and computational cost,
only requiring a few evaluations of the most accurate $f^{k}$ and
many evaluations of the less costly $f^{1},\dots,f^{k-1}$. If the
drift lies in the so-called Harder than Monte Carlo (HTMC) regime,
i.e. it requires $\epsilon^{-\gamma}$ compute to be $\epsilon$-approximated
for some $\gamma>2$, then ML-EM $\epsilon$-approximates the solution
of the SDE with $\epsilon^{-\gamma}$ compute, improving over the
traditional EM rate of $\epsilon^{-\gamma-1}$. In other terms it
allows us to solve the SDE at the same cost as a single evaluation
of the drift. In the context of diffusion models, the different levels
$f^{1},\dots,f^{k}$ are obtained by training UNets of increasing
sizes, and ML-EM allows us to perform sampling with the equivalent
of a single evaluation of the largest UNet. Our numerical experiments
confirm our theory: we obtain up to fourfold speedups for image generation
on the CelebA dataset downscaled to $64\times64$, where we measure
a $\gamma\approx2.5$. Given that this is a polynomial speedup, we
expect even stronger speedups in practical applications which involve
orders of magnitude larger networks.
\end{abstract}

\section{Introduction}

Denoising Diffusion Probabilistic Models (DDPMs) \cite{sohldickstein2015unsupervised,ho2020_DDPM,song2021scorebased}
are the state of the art technique for generating images \cite{rombach2022_latent_diff_models},
videos, and many more \cite{watson2023novo,arts2023moleculardynamics}.
The images are generated by a diffusion process, a Stochastic Differential
Equation (SDE), or sometimes an ODE which starts from Gaussian noise
and ends up with a distribution that approximates the `true data distribution'.
The drift term is learned with a large Deep Neural Network (DNN) -
typically a UNet, a type of Convolutional Neural Network (CNN) - so
that the computational cost of DDPMs is dominated by the number of
DNN evaluations (sometimes written NFE: ``number of function evaluations''),
which has to increase to reach higher levels of accuracy, or equivalently
a smaller error $\epsilon$ between the continuous SDE solution and
the chosen discretization.

This high computational cost has a large environmental impact given
the wide spread use of these models and the large scale of the underlying
DNN. It also limits the application of these techniques to setting
that require real-time generation such as music. Several methods have
been proposed to reduce this computational cost:

The SDE noise forces relatively small step sizes, so \cite{song2020_DDIM}
have proposed replacing the backward SDE with a backward ODE (or something
in between, a SDE with a smaller Brownian term) called the Denoising
Diffusion Implicit Models (DDIM). DDIMs can achieve realistic images
with an order of magnitude less steps than DDPMs, thus drastically
reducing the NFE. Assuming perfect estimation of the score, the final
distribution should be the same for DDPMs and DDIMs, in practice however,
it appears that DDIMs result in slightly lower quality images, even
at the smallest step sizes. Several other methods have been proposed
to further `straighten' the flow \cite{liu2022_rectified_flow,lipman2022_flow_matching},
or reduce the dimensionality by working in a latent space \cite{rombach2022_latent_diff_models}.
The ODE formulation also opens the door to adapting efficient solvers
such as the Runge-Kutta family of methods to DDIMs \cite{lu2022_DPM_solver,lu2022_DPM_solver_++}.
Yet another approach is to train a new DNN to implement multiple steps
of the denoising process, or even the full denoising process \cite{salimans2022_progressive_distillation_diffusion,meng2023_guided_distillation,sauer2023_adversarial_diff_distil,song2023consistency}.

\subsection{Compute Scaling Analysis\label{subsec:Compute-Scaling-Analysis}}

We start with a simple `napkin math' analysis of the scaling $T\sim\epsilon^{-\alpha}$
of the compute time $T$ required to generate an image with error
$\epsilon$. The error $\epsilon$ can be decomposed into a discretization
error $\epsilon_{discr}$ and approximation error $\epsilon_{approx}$. 

\textbf{Discretization error: }We expect the discretization error
$\epsilon_{discr}$ to scale as $n^{-\frac{1}{\phi}}$ where $n$is
the number of steps or NFE. For example, one has $\phi=1$ with the
Euler-Maruyama method, and $\phi=\frac{1}{4}$ for 4th order Runge-Kutta
method.

\textbf{Approximation error:} We expect the approximation error $\epsilon_{approx}$
between our DNN and the `true denoiser' (or equivalently the true
score) to follow a typical scaling law $\epsilon_{approx}=N^{-\frac{1}{\psi}}+P^{-\frac{1}{\gamma}}$
for $N$ the number of training samples and $P$ the number of parameters,
matching what has been observed empirically \cite{kaplan2020scaling,henighan2020scaling,hoffmann2022scaling}.
Since the training data size is irrelevant to the computational cost
of generating images, we drop the data-size term $N^{-\frac{1}{\psi}}$
(i.e. we assume that we always have enough data that the approximation
error is dominated by the network size, and not the dataset size).
We will consider the rate $\gamma$ to be given, but there exists
a range of theoretical works that give predictions for $\gamma$ under
a number of settings \cite{bahri2024explaining,Bordelonetal2024b,Bordelonetal2025,Michaudetal2023}.

Assuming that the computational cost of a DNN is proportional to $P$,
it is optimal to choose $n\sim\epsilon^{-\phi}$ and $P\sim\epsilon^{-\gamma}$
to reach an error of $\epsilon$ at a computational cost of order
$nP\sim\epsilon^{-(\phi+\gamma)}$. This leads to a scaling law of
$\epsilon^{-(\gamma+1)}$ for the Euler-Maruyama method and $\epsilon^{-(\gamma+\frac{1}{4})}$
for Runge-Kutta or a variant thereof. Most of the aforementioned methods
should only lead to constant improvements, with no improvements on
the exponents, except for the improved solvers inspired by Runge-Kutta,
as already mentioned. In this framework, it seems that a rate of $\epsilon^{-\gamma}$
is impossible, because any discretization method would require a growing
NFE to reach smaller and smaller errors.

This paper shows that a rate of $\epsilon^{-\gamma}$ is actually
possible if we assume that the score is hard enough to approximate
$\gamma>2$. Inspired by Multilevel Monte Carlo (MLMC) methods, we
rely on a Multilevel Euler-Maruyama method, where a range of DNNs
of increasing sizes are randomly used at each discretization step,
using large DNNs with low probability and small ones with high probability,
thus leading to a small overall computational cost. This is especially
impactful for SDEs, where no fast discretization methods exist (i.e.
no Runge-Kutta or analogue, with $\phi<1$).

It might seem almost paradoxical how assuming that the task is ``hard
enough'' allows us to gain speedups. But previous work has shown
the emergence of beneficial properties such as convexity in the so-called
Harder than Monte Carlo (HTMC) regime (when $\gamma>2$) \cite{jacot2025_DLCPC}.
This same paper \cite{jacot2025_DLCPC} also proves a connection between
the sets of functions approximable with a DNN and different HTMC spaces,
which motivates our HTMC assumption ($\gamma>2$). This assumption
is further motivated by empirical evidence that DNNs trained on images
follow scaling laws \cite{henighan2020scaling}, which obtains empirical
rates $\epsilon^{2}\sim P^{-0.24}$ for $8\times8$ images, $P^{-0.22}$
for $16\times16$, and $P^{-0.13}$ for $32\times32$ images, which
would correspond to $\gamma\approx8.3$, $\gamma\approx9.1$, and
$\gamma\approx15.4$ respectively. These are all far into the HTMC
regime, and the bigger the image the larger the rate $\gamma$. This
paper is just a first example of the kind of speedups that can be
obtained under the HTMC assumption.

\section{Setup}

This paper focuses on the efficient approximation of SDEs of the form
\[
dx_{t}=f_{t}(x_{t})dt+\sigma_{t}dW_{t}.
\]
For simplicity, we assume that the noise is isotropic and that its
variance $\sigma_{t}^{2}$ depends on time $t$ but not on $x_{t}$.
We then consider ODEs as the special case $\sigma_{t}=0$.
\begin{example}[Denoising Diffusion Probabilistic Model - DDPM]
\textbf{ }The main motivation is to apply it to the reverse diffusion
process (which starts from a large $T>0$ and then goes back in time
until reaching $t=0$, hence the minus in front of $dx_{t}$).
\[
-dx_{t}=\left(\frac{1}{2}x_{t}+s_{t}(x_{t})\right)dt+dW_{t}
\]
for the score $s_{t}(x_{t})=\nabla\log\rho_{t}$ where $\rho_{t}$
is the distribution of $\sqrt{e^{-t}}x_{0}+\sqrt{1-e^{-t}}\mathcal{N}(0,1)$.
\end{example}
\begin{example}[Denoising Diffusion Implicit Model - DDIM]
\textbf{} We will also consider the probability flow ODE, which has
the same marginal distribution at each time $t$, but has no noise
term:
\[
-\frac{dx}{dt}=\frac{1}{2}x_{t}+\frac{1}{2}s_{t}(x_{t}).
\]
\end{example}
What makes the diffusion SDE unique is the fact that the score $s_{t}$
can only be approximated by very large DNNs, and these DNNs generally
follow scaling laws, i.e. the size of the network (and therefore its
computational cost) must scale rapidly if one wants to obtain more
and more accurate approximations of the true score. We formalize this
into the following assumption:
\begin{assumption}[Scaling Law Assumption]
\label{assu:scaling_law}There is a sequence of estimators $f_{t}^{k}$
which approximate $f_{t}$ within a $2^{-k}$ error
\[
\left\Vert f_{t}-f_{t}^{k}\right\Vert _{\infty}\leq2^{-k}
\]
for all $t\in[0,T]$ and whose compute $C(f_{t}^{k})$ scales exponentially
in $k$:
\[
C(f_{t}^{k})\leq c^{\gamma}2^{\gamma k}
\]
for some $\gamma$ (the convention of taking the constant $c$ to
the $\gamma$-th power will lead to cleaner formulas).
\end{assumption}
This assumption is motivated by the strong empirical evidence that
DNNs follow scaling laws \cite{kaplan2020scaling,henighan2020scaling}:
there is a $\gamma$ such that the test error can be bounded $\sqrt{\mathbb{E}\left\Vert f(x)-f_{\theta}(x)\right\Vert ^{2}}\leq cP^{-\frac{1}{\gamma}}$
in terms of the number of parameters $P$ of the DNN, a scaling constant
$\gamma$ and a prefactor $c$. Roughly speaking, for fully-connected
networks, the computational cost of evaluating $f_{\theta}$ is proportional
to $P$, since each parameter is used once, for CNNs it scales as
$Pwh$ (for $w,h$ the width and height of the image), for RNNs as
$P\ell$ (where $\ell$ is the sequence length), and for Transformers
as $P\ell^{2}$. We can then rewrite the bound to match the scaling
assumption: $C(f_{\theta})\leq sc^{\gamma}\epsilon^{-\gamma}$ (where
$s$ is either $1,wh,\ell$ or $\ell^{2}$). The ubiquity of these
scaling laws in practice implies that this is a very reasonable assumption
(note that we assume a bound on the $L_{\infty}$ rather than the
$L_{2}$ error, but this is mainly to simplify the derivations, a
bound on the $L_{2}$ could be shown to be enough with a few extra
assumption and a bit more work).

The constant $c$ in the assumption is closely related to the so-called
HTMC norm $\left\Vert f\right\Vert _{M^{\gamma}}$ as defined in \cite{jacot2025_DLCPC}:
if for all $k$ the estimator $f_{t}^{k}$ has minimal computational
complexity amongst all $2^{-k}$-estimators (in the sense that it
minimizes circuit size), then $c=\left\Vert f\right\Vert _{M^{\gamma}}$.
However we do not need to assume that we have found the most computational
efficient estimator for the results of this paper to apply, and thus
in general we only have $c\geq\left\Vert f\right\Vert _{M^{\gamma}}$.

\textbf{Euler-Maruyama Method:} The baseline we consider to approximate
our SDE algorithmically is the Euler-Maruyama method \cite{maruyama1955continuous}
together with a certain approximation $f^{k}$ of $f$ (typically
this would be the largest DNN we can train), yielding
\[
y_{t+\eta}=y_{t}+\eta f_{t}^{k}(y_{t})+\sqrt{\eta}\sigma_{t}Z_{t}
\]
with a step size $\eta$ (which we assume constant) and $Z_{t}\sim\mathcal{N}(0,1)$.
In the absence of noise ($\sigma_{t}=0$), we recover the Euler method.
\begin{rem}
Note that for DDPM, the Euler-Maruyama discretization is slightly
different from the usual implementation of DDPM, and similarly for
DDIM. We describe in Appendix \ref{sec:appendix-comparison} why the
two are equivalent up to subdominant terms as the learning rate goes
to zero.

\begin{figure}
\subfloat[DDPM MSE]{\includegraphics[scale=0.54]{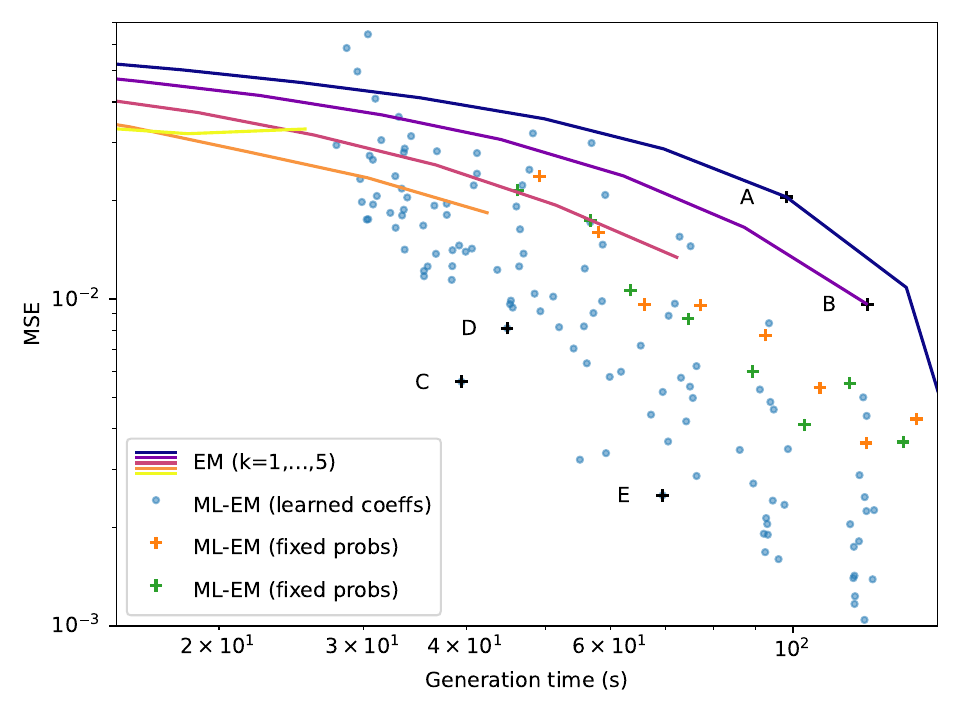}

}\subfloat[DDPM samples]{\includegraphics[scale=0.45]{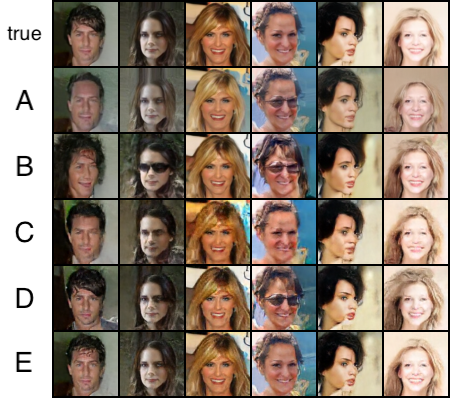}

}

\subfloat[DDIM MSE]{\includegraphics[scale=0.58]{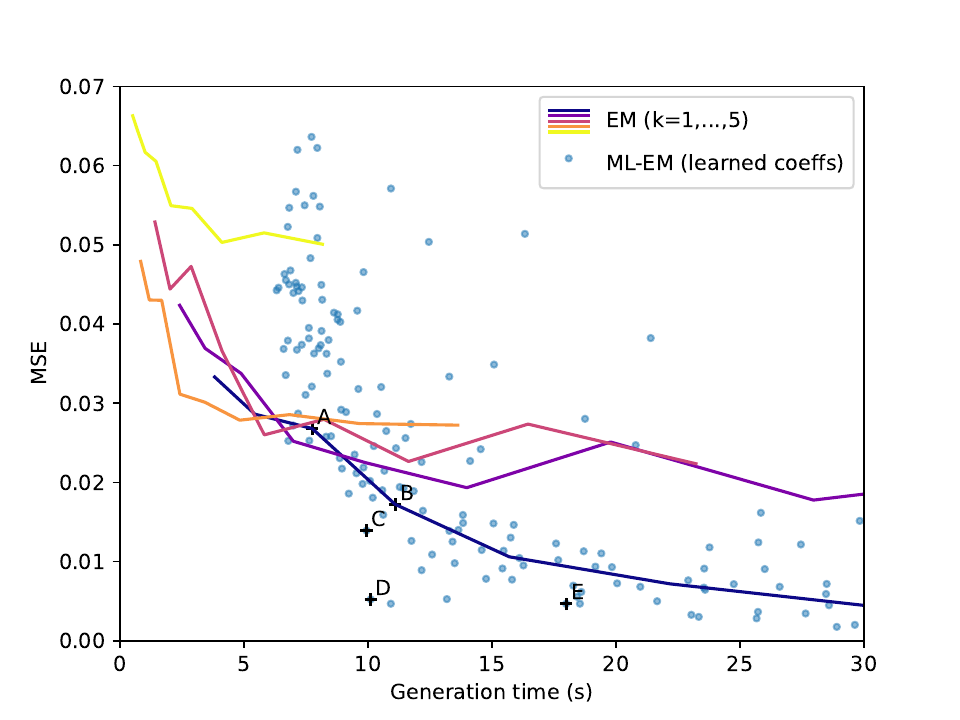}

}\subfloat[DDIM samples]{\includegraphics[scale=0.45]{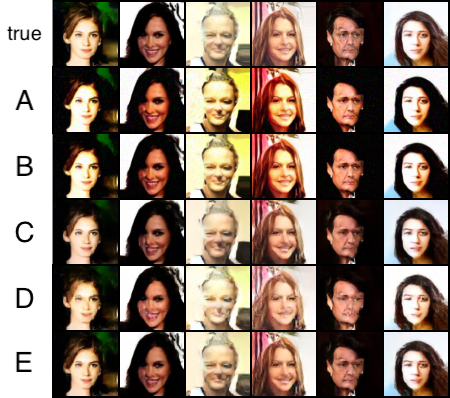}

}

\caption{(Left) We compare ML-EM to EM method of generation for DDPM (top)
and DDIM (bottom) by plotting the MSE between the generated sample
and the `true' sample (generated with a 1000 steps DDPM/DDIM) with
the same initial and Brownian noise, the $x$-axis is the time in
seconds required to generate 200 images. Solid lines are the traditional
EM method with different network sizes $f^{1},\dots,f^{5}$ and with
number of steps ranging from $58$ to $933$. The crosses and dots
are the ML-EM method with three networks $\{f^{1},f^{3},f^{5}\}$
and with either fixed probabilities or learned coefficients $\alpha_{k},\beta_{k}$
(see Section \ref{sec:Numerical-Experiments}). We add a $\Delta\in\{-3.0,-2.5,\dots,2.5,3.0\}$
to the $\beta_{k}$s and perform 15 trials over the sampling of the
Bernoullis RVs (remember that the starting noise and Brownian motion
are fixed). The sampling of the Bernoullis that yield the smallest
MSE can be memorized, it is therefore okay to compare the straight
lines of classical EM to the best trials of ML-EM. (Right) The first
6 generated images for the `true sample' and four selected instances
of EM (A,B) and ML-EM (C,D,E). \protect \\
For DDPMs, ML-EM with learned coefficients clearly outperforms all
other methods, requiring in some cases 4 times less compute time than
EM to reach the same MSE, or reaching a 10 times smaller MSE at the
same compute time. For DDIM the advantage of ML-EM is less clear,
but still present. Visually, it appears that the main advantage of
ML-EM is that it avoids discolorations/contrast issues present for
EM with few steps. Interestingly, DDIM appears to suffer from these
discoloration even with 1000 steps.}
\end{figure}
\end{rem}

\section{Multilevel Euler-Maruyama\label{sec:ML-EM}}

Our strategy is to use a variation on the Multilevel Monte Carlo (MLMC)
\cite{Giles2008MultilevelMC,Giles2015} method at each step of the
discretization\footnote{The original motivation for MLMC was to compute expectations over
the sampling of SDEs, in which case multiple discretization of SDE
paths are computed, with different step-sizes to obtain different
levels of accuracy. In our case, we only want to evaluate one SDE
path, and the multiple levels result from the different DNN sizes.
This similarity in the setting might lead to confusion, but MLMC is
not specific to SDEs, it can be applied whenever one has access to
a range of estimators with different errors and computational cost.}:
\[
y_{t+\eta}=y_{t}+\eta\sum_{k=k_{min}}^{k_{max}}\frac{B^{k}}{p_{k}}\left[f_{t}^{k}(y_{t})-f_{t}^{k-1}(y_{t})\right]+\sqrt{\eta}\sigma_{t}Z_{t}
\]
where $B^{k}\sim\text{Bernoulli}(p_{k})$. The idea is that we are
going to choose a probability $p_{k}$ that decreases exponentially
in $k$ so that at most steps, we will not need to evaluate the best
estimator $f^{k_{max}}$. Note that our guarantees will therefore
be in terms of the expected computational cost 
\[
\mathbb{E}C(y_{T})=\sum_{t,k}p_{k}C(f_{t}^{k})\leq\frac{T}{\eta}c^{\gamma}\sum_{k}p_{k}2^{\gamma k}.
\]
One could then use the probabilistic method to imply the existence
of a deterministic choice of the $B^{k}$ that reaches a certain error
at a certain computational cost. In practice, we observe that $C(y_{T})$
concentrates in its expectation, whereas the error exhibits a significant
variance over the sampling of the $B^{k}$ (though it is very consistent
across different initialization of the SDE and the sampling of the
Brownian motion). We therefore perform a best of 15 to identify the
optimal choices of Bernoulli random variables $B^{k}$.

We choose $k_{min}(t)=-\left\lceil \log_{2}\left\Vert f_{t}\right\Vert _{\infty}\right\rceil $
so that we may assume that we may choose $f_{t}^{k_{min}(t)-1}=0$
as an estimator, and thus we recover the Euler-Maruyama method in
expectation 
\[
\mathbb{E}\left[y_{t+\eta}|y_{t}\right]=y_{t}+\eta f_{t}^{k_{max}}(y_{t})+\sqrt{\eta}\sigma_{t}Z_{t}.
\]

We will also make the following classical Lipschitzness assumptions:
\begin{assumption}
\label{assu:regularity_boundedness}For all $t$ and $k$, $Lip(f_{t}),Lip(f_{t}^{k})\leq L$.
\end{assumption}
We now bound the distance between $y_{t}$ and the Euler-Maruyama
discretization $x_{t}^{(\eta)}$ of the true flow
\[
x_{t+\eta}^{(\eta)}=x_{t}^{(\eta)}+\eta f_{t}(x_{t}^{(\eta)})+\sqrt{\eta}\sigma_{t}Z_{t}.
\]

\begin{thm}
\label{thm:ML-EM}Under Assumptions \ref{assu:scaling_law} and \ref{assu:regularity_boundedness},
for any step size $\eta>0$, error $\epsilon>0$, and time $T=i\eta>0$,
if we choose $k_{min}=-\left\lfloor \log_{2}c\right\rfloor $, $k_{max}=-\left\lfloor \log_{2}\left(\frac{2}{L}e^{L(T+\eta)}\epsilon\right)\right\rfloor $
and $p_{k}=\min\{C2^{-(1+\frac{\gamma}{2})k},1\}$ for some constant
$C$, we have $\mathbb{E}\left\Vert x_{T}^{(\eta)}-y_{T}\right\Vert ^{2}\leq\epsilon^{2}$
at an expected computational cost of at most 
\[
18\left[L^{3}T^{3}+\frac{LT}{2}\right]E_{\gamma}\left(\frac{ce^{L(T+\eta)}}{L\epsilon}\right)
\]
where
\[
E_{\gamma}(r)=\begin{cases}
\frac{1}{(1-2^{\frac{\gamma}{2}-1})^{2}}r^{2} & \gamma<2\\
r^{2}\left(3+\log_{2}r\right) & \gamma=2\\
\frac{2^{3(\gamma-2)}}{\left(2^{\frac{\gamma}{2}-1}-1\right)^{2}}r^{\gamma} & \gamma>2.
\end{cases}
\]
\end{thm}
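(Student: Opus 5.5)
We will compare the two discretizations driven by the same Gaussian increments $Z_t$, so that the noise cancels in $e_t:=y_t-x_t^{(\eta)}$. Write the ML-EM increment as $\eta f_t^{k_{max}}(y_t)+\eta\xi_t$. Here
\[
\xi_t=\sum_{k}\Big(\tfrac{B^k}{p_k}-1\Big)\big[f_t^k(y_t)-f_t^{k-1}(y_t)\big]
\]
is, conditionally on $y_t$, a mean-zero fluctuation; this uses the choice of $k_{min}$ for which $f^{k_{min}-1}=0$ is admissible. The error then satisfies the recursion
\[
e_{t+\eta}=e_t+\eta\big[f_t(y_t)-f_t(x_t^{(\eta)})\big]+\eta\big[f_t^{k_{max}}(y_t)-f_t(y_t)\big]+\eta\xi_t .
\]
The three terms are handled separately. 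The first is bounded by $\eta L\|e_t\|$ using Assumption~\ref{assu:regularity_boundedness}. The second is a deterministic bias of size at most $\eta 2^{-k_{max}}$ by Assumption~\ref{assu:scaling_law}. The third is a martingale increment, orthogonal in $L^2$ to everything measurable at time $t$.

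First step: variance of the fluctuation. The Bernoullis are independent, and $\|f^k-f^{k-1}\|_\infty\le 2^{-k}+2^{-k+1}=3\cdot 2^{-k}$. Hence $\mathbb{E}[\|\xi_t\|^2\mid y_t]\le\sum_k\frac{1-p_k}{p_k}\,9\cdot 4^{-k}\le 9\sum_k 4^{-k}/p_k$. With $p_k=C2^{-(1+\gamma/2)k}$ this becomes $\frac{9}{C}\sum_k 2^{(\gamma/2-1)k}$.

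Second step: propagate the error. Split $e_t=b_t+m_t$, where $m$ collects the martingale contributions. Alternatively, work directly with $\sqrt{\mathbb{E}\|e_t\|^2}$: expand $\mathbb{E}\|e_{t+\eta}\|^2$ and use that the cross term with $\xi_t$ vanishes. This gives
\[
\sqrt{\mathbb{E}\|e_{t+\eta}\|^2}\le\sqrt{(1+\eta L)^2\,\mathbb{E}\|e_t\|^2+\dots}
\]
where the omitted terms come from the bias and the fluctuation. A discrete Gr\"onwall argument then gives roughly
\[
\mathbb{E}\|e_T\|^2\lesssim e^{2L(T+\eta)}\Big[\big(T\,2^{-k_{max}}\big)^2+\eta T\,\tfrac{9}{C}\textstyle\sum_k 2^{(\gamma/2-1)k}\Big].
\]
The bias contributes $\frac{1}{L}e^{LT}2^{-k_{max}}$, since $(e^{LT}-1)/L$ is the sum of the geometric series. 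The stated $k_{max}$ makes this at most $\epsilon/2$. I expect the factors $L^3T^3+LT/2$ to come from bounding $(e^{LT}-1)/L$ and $T$ against each other. Matching those exact constants is the fiddly part and I would not fight for them initially.

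Third step: choose $C$ so that the variance term is also $\le\epsilon^2/2$. This requires $C\approx\frac{18\,\eta T e^{2L(T+\eta)}}{\epsilon^2}S$, with $S=\sum_{k=k_{min}}^{k_{max}}2^{(\gamma/2-1)k}$. The expected cost from the display before the theorem is
\[
\frac{T}{\eta}c^\gamma\sum_k p_k2^{\gamma k}=\frac{T}{\eta}c^\gamma C\,S' ,\qquad S'=\sum_k 2^{(\gamma/2-1)k}=S .
\]
So the cost is about $18\,T^2e^{2L(T+\eta)}\epsilon^{-2}c^\gamma S^2$, and the $\eta$ cancels, which explains why the bound is step-size independent. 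The key observation is this cancellation: variance times cost produces the square $S^2$.

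Last step: evaluate $S^2$ in the three regimes, with $k_{min}=-\lfloor\log_2 c\rfloor$ (so $2^{-k_{min}}\approx c$) and $2^{k_{max}}\approx\frac{L}{2\epsilon}e^{-L(T+\eta)}$. For $\gamma<2$ the geometric series is dominated by $k_{min}$. Then $c^\gamma S^2\approx c^\gamma c^{2-\gamma}/(1-2^{\gamma/2-1})^2$, giving the $r^2$ rate with $r=ce^{L(T+\eta)}/(L\epsilon)$. For $\gamma=2$, $S$ counts the levels, $k_{max}-k_{min}\lesssim\log_2 r+3$. For $\gamma>2$ the series is dominated by $k_{max}$. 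Then $S\lesssim 2^{(\gamma/2-1)k_{max}}/(2^{\gamma/2-1}-1)$, and $c^\gamma S^2\epsilon^{-2}\sim r^\gamma$. The floors contribute the factor $2^{3(\gamma-2)}$.

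The main obstacle I anticipate is not conceptual but bookkeeping. One part is the $\min\{\cdot,1\}$ truncation of $p_k$: levels with $p_k=1$ have zero variance but full cost, so that part of the cost must be bounded separately by a comparable geometric sum. The other part is tracking how the constants $L^3T^3+LT/2$ and $e^{L(T+\eta)}$ emerge from the Gr\"onwall step. I would first prove the bound up to an absolute constant and then tighten.
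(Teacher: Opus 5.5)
Your architecture is the same as the paper's. Both bound the per-step MLMC variance by $9\sum_k 4^{-k}/p_k$ and run a Gr\"onwall argument. Both choose $k_{max}$ so the drift bias is $\le\epsilon/2$ and $C$ so the fluctuation part is small. Both bound the cost by $\frac{T}{\eta}c^\gamma CS$ with $C\propto\eta S$, so $\eta$ cancels and $S^2$ appears, and then split into three geometric regimes.

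The propagation step is genuinely different. The paper splits around the Bernoulli-mean trajectory, $b_t=\|\mathbb{E}y_t-x_t^{(\eta)}\|$ and $v_t^2=\mathbb{E}\|y_t-\mathbb{E}y_t\|^2$. Since $f$ is nonlinear, it can only use $\|\mathbb{E}f(y_t)-f(x_t^{(\eta)})\|\le L\sqrt{b_t^2+v_t^2}$, so the bias recursion picks up a term $\eta L v_t$. The sum $\eta L\sum_j(1+\eta L)^{i-j}v_j$ then contributes $\propto L\eta T^2$ to the squared error. Together with $v_T^2\propto\eta/(2L)$ and the factor $T/\eta$ in the cost, this is where $L^3T^3+LT/2$ comes from. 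It does not come from comparing $(e^{LT}-1)/L$ with $T$, so you should not try to match it.

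Your pathwise comparison avoids this leakage entirely. Write $u_t=\sqrt{\mathbb{E}\|e_t\|^2}$ and $V=9S/C$. The recursion $u_{t+\eta}^2\le((1+\eta L)u_t+\eta 2^{-k_{max}})^2+\eta^2V$ closes by induction as $u_i\le\beta_i+\sqrt{\nu_i}$, where $\beta_{i+1}=(1+\eta L)\beta_i+\eta 2^{-k_{max}}$ and $\nu_{i+1}=(1+\eta L)^2\nu_i+\eta^2V$. The inductive step only needs $(1+\eta L)\sqrt{\nu_i}\le\sqrt{\nu_{i+1}}$.

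This gives $(\beta+\sqrt{\nu})^2$, not $\beta^2+\nu$, so you need $\nu_i\le\epsilon^2/4$. If you bound $\nu_i$ by $\eta T V e^{2LT}$, the cost becomes $36T^2(\cdots)$, which can exceed the stated constant when $LT$ is of order one. Keep the geometric sum instead: $\nu_i\le\eta V(e^{2LT}-1)/(2L)$. Then $C=18\eta S(e^{2LT}-1)/(L\epsilon^2)$ suffices, and for $\gamma\neq 2$ the cost is at most $18\,LT(1-e^{-2LT})E_\gamma(r)$. This is below the stated bound because $1-e^{-2x}\le x^2+\tfrac12$, and its prefactor is only linear in $LT$ for large $LT$.

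What the paper's split buys in exchange is two linear recursions that unroll directly, in familiar MLMC bias/variance language. Your use of $c^\gamma$ together with $2^{-k_{min}}\le c$ also produces exactly the stated $r=ce^{L(T+\eta)}/(L\epsilon)$, whereas the paper's proof drifts to $c^{1/\gamma}$.

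Three points need fixing.

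First, the $k_{max}$ in the statement has its argument inverted. Taken literally, it gives $2^{-k_{max}}\approx\frac{2\epsilon}{L}e^{L(T+\eta)}$. Then your claim that the bias is $\le\epsilon/2$ fails, and your $2^{k_{max}}\approx\frac{L}{2\epsilon}e^{-L(T+\eta)}$ does not yield $r^\gamma$. Use the choice from the paper's proof, $k_{max}=-\lfloor\log_2(\frac{L}{2}e^{-L(T+\eta)}\epsilon)\rfloor$. This gives $2^{-k_{max}}\le\frac{L\epsilon}{2}e^{-L(T+\eta)}$ and $2^{k_{max}}\le\frac{4e^{L(T+\eta)}}{L\epsilon}$. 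The factor $2^{3(\gamma-2)}$ is then $2^{\gamma-2}$ from the squared geometric-sum prefactor times $4^{\gamma-2}$ from this bound.

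Second, the truncation is not an obstacle. Since $p_k\le C2^{-(1+\gamma/2)k}$ always holds, the cost bound needs nothing extra. Truncated levels contribute zero variance through the factor $(1-p_k)/p_k$ you wrote. Just do not weaken that factor to $1/p_k$: when $p_k=1$ one has $4^{-k}\ge C^{-1}2^{(\gamma/2-1)k}$.

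Third, for $\gamma=2$ the cost is $\propto S^2$, which gives $r^2(3+\log_2 r)^2$. Neither your argument nor the paper's yields the single logarithm in $E_2$; the paper's final display drops the square.
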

\textbf{Harder than Monte Carlo (HTMC) regime ($\gamma>2$):} This
result is particularly relevant in the Harder than Monte Carlo (HTMC)
regime \cite{jacot2025_DLCPC}, when $\gamma>2$, where the computational
complexity ($\epsilon^{-\gamma}$) of solving the SDE is the same
as the complexity as a single evaluation of the best estimator!

As already discussed in Section \ref{subsec:Compute-Scaling-Analysis},
there is strong empirical evidence that DNNs follow scaling laws which
are ``flat enough'' to correspond to $\gamma$s that are far into
the HTMC regime. In Figure \ref{fig:estimate_gamma}, we estimate
$\gamma\approx2.5$ for the CelebA dataset (cropped and downscaled
to $64\times64$).

\textbf{Independence on step-size $\eta$: }Notice how the bound on
the compute required to reach an $\epsilon$ is essentially independent
of the step-size $\eta$ (to be precise, as $\eta\searrow0$, it decreases
to a finite value). This is because the probability that we evaluate
any one of the levels $f^{k}$ is proportional to $\eta$, therefore
the number of evaluations of each level $f^{k}$ remains constant
as $\eta\searrow0$. In this limit $y_{t}$ converges to some form
of Poisson jump process that approximates the original SDE $x_{t}$
with the same error and compute guarantees. This also implies that
there is no need to use more complex discretization scheme than the
EM method, because one can always take a smaller $\eta$ at no computational
cost (the only cost is that we need to sample the noise $Z_{t}$ and
add it, but when working with large DNNs, this computational cost
is negligible in comparison to the DNN evaluations).

\textbf{Choosing the probabilities $p_{k}$:} Theorem \ref{thm:ML-EM}
requires a very specific choice of probabilities, which requires knowledge
of the rate $\gamma,$ Lipschitz constant $L$ which are not really
accessible at first glance. Thankfully, it turns out that we have
a lot of flexibility in our choice of the $p_{k}$s, the proof can
easily adapted to show that if $p_{k}=C2^{-\beta k}$ for constant
$C$ and any exponent $\beta$ that lies in the range $(2,\gamma)$,
then the expected squared error will be $O(C^{-1}\epsilon^{2-\beta})$
with an $O(C\epsilon^{\gamma-\beta})$ expected computational cost,
so that by choosing $C\sim\epsilon^{-\beta}$, one can reach an $\epsilon$
error with an $O(\epsilon^{-\gamma})$ compute, recovering the right
rate. This means that we only have to tune one hyper-parameter, $C$,
to reach the optimal rate. Choosing $\beta=2$ or $\beta=\gamma$
also leads to the right rates up to some additional $\log\epsilon$
terms, but these are particularly straightforward to implement: $\beta=\gamma$
corresponds to choosing $p_{k}$ inversely proportional to the compute
time of $f^{k}$, which can easily be estimated.

Nevertheless, we also propose in the next section a method for learning
the $p_{k}$s with SGD to obtain as much computational gains as possible,
by not only obtaining the optimal rate, but also the optimal prefactor. 

\textbf{Choosing $k_{min}$ and $k_{max}$: }The choice of $k_{min}$
has very little impact on the final error. The choice of $k_{max}$
induces a lower bound on the minimal error that we can reach, since
the ML-EM method will always be less accurate than using only the
best estimator $f^{k_{max}}$ (though it can reach a similar error
much faster). In practice the choice of $k_{max}$ will mostly be
determined by computational constraints: what is the largest network
that can reasonably be trained on a certain compute budget.

\textbf{Exponential Constant:} The exponential term $e^{TL}$ emerges
naturally from the use of a Grönwall proof technique and also appears
in the classical EM method. It represents the fact that in the worst
case an error of size $\epsilon$ in the first few steps of the SDE
could get scaled up by $e^{TL}$ when we reach the final time $T$
(e.g. if $f(x)=Lx$). In diffusion models, since denoising acts as
a form of contraction rather than an expansion of the error, it is
reasonable to hope that this exponential blow-up will be naturally
avoided, and this seems to be what we observe in our experiments. 

Note that if one were instead in a setting where this exponential
blow-up is real, it might be advantageous to choose time-dependent
probabilities $p_{k}(t)$ that decrease in time, to make less errors
at times $t$ whose errors will be most impactful. We discuss a method
for doing so in the next section.

\subsection{Adaptive Method\label{subsec:Adaptive-Method}}

The question of how the errors from different times $t$ propagate
to the final time is very crucial in practice. ideally we would like
to adapt our estimation method to be more accurate at times $t$ where
errors contribute more to the final error. This can be achieved by
letting the probabilities $p_{k}$ and the max accuracy $k_{max}$
depend on time $t$. One could try to bound this error propagation
with some quantity and use it to choose $p_{k}(t)$ and $k_{max}(t)$.
Instead we take a very ``deep learning'' approach and learn the
optimal probabilities by minimizing the error with SGD directly. We
consider a simple time dependence 
\[
p_{k}(t)=\sigma\left(\alpha_{k}\log(t+\delta)+\beta_{k}\right)
\]
for parameters $\alpha_{k},\beta_{k}$ and small $\delta$ ($\delta=0.1$
in our experiments) and the sigmoid $\sigma$. Our goal is to find
the parameters $\alpha_{k},\beta_{k}$ that minimize the regularized
loss
\[
\mathcal{L_{\lambda}}(\alpha_{k},\beta_{k})=\mathbb{E}_{x_{T},Z_{t},B_{k}}\left\Vert x_{T}^{(\eta)}-y_{T}\right\Vert ^{2}+\lambda\sum_{i=0}^{T\eta^{-1}}p_{k}(i\eta)T_{k}
\]
where $T_{k}$ is the computational cost (either in FLOPs or in time)
of one evaluation of $f^{k}$, which can be easily estimated empirically.
The expectation is over the sampling of the starting point $x_{T}\sim\mathcal{N}(0,1)$
of the backward process and the noise $Z_{t}\sim\mathcal{N}(0,1)$
and Bernoullis $B_{k}(t)\sim Bernoulli(p_{k}(t))$ at each step.

There are two issues that make it hard to compute the gradient $\nabla\mathcal{L}_{\lambda}(\alpha_{k},\beta_{k})$:
we need to differentiate `through' the sampling of the Bernoulli random
variables, and on a more practical level, we cannot realistically
perform backpropagation through the whole SDE as it would require
keeping in memory all activations of every application of the network
(for all times $i\eta$ and all samples of $x_{T},Z_{t}$) which would
overshoot our memory budget. But these can be fixed with the right
techniques:

\textbf{Differentiating through Bernoullis:} For any function $f(B)$
of a Bernoulli random variable $B\sim Bernoulli(p)$, the derivative
of the expectation $\mathbb{E}[f(B)]$ w.r.t. its probability $p$
is $f(1)-f(0)$. Since 
\[
\mathbb{E}\left[f(B)\frac{B-p}{p(1-p)}\right]=pf(1)\frac{1-p}{p(1-p)}+(1-p)f(0)\frac{0-p}{p(1-p)}=f(1)-f(0),
\]
we can use $f(B)\frac{B-p}{p(1-p)}$ as an unbiased estimator for
$\frac{d}{dp}\mathbb{E}[f(B)]$. Now note that because we divide by
$p(1-p)$ which approaches zero as $p\approx0,1$ this estimator could
potentially have a lot of variance, but thankfully if $p$ is parametrized
as a sigmoid, as in our setting $p(t)=\sigma(\alpha\log(t+\delta)+\beta)$,
then by the chain rule, we have 
\begin{align*}
\partial_{\alpha}\mathbb{E}\left[f(B)\right] & =\mathbb{E}\left[f(B)\frac{B-p(t)}{p(t)(1-p(t))}\right]p(t)(1-p(t))\log(t+\delta)=\mathbb{E}\left[f(B)(B-p(t))\right]\log(t+\delta)\\
\partial_{\beta}\mathbb{E}\left[f(B)\right] & =\mathbb{E}\left[f(B)\frac{B-p(t)}{p(t)(1-p(t))}\right]p(t)(1-p(t))=\mathbb{E}\left[f(B)(B-p(t))\right]
\end{align*}
so that the estimates $f(B)(B-p(t))\log(t+\delta)$ and $f(B)(B-p(t))$
for the derivative w.r.t. $\alpha$ and $\beta$ can be expected to
have bounded variance as long as $f(B)$ and $\log(t+\delta)$ remain
bounded.

\textbf{Forward gradient computation instead of backpropagation:}
To avoid the memory cost of backpropagation, we instead rely on forward
propagation \cite{baydin2022_forward_gradients}, which allows us
to compute the scalar product $\nabla\mathcal{L}_{\lambda}^{T}v$
of the gradient $\nabla\mathcal{L}_{\lambda}$ with a vector $v$,
at a constant memory usage in time $i\eta$. The gradient $\nabla\mathcal{L}_{\lambda}$
can then be approximated by $\nabla\mathcal{L}_{\lambda}vv^{T}$ for
a random Gaussian vector $v\sim\mathcal{N}(0,I)$. This is again an
unbiased estimator since $\mathbb{E}_{v}\left[\nabla\mathcal{L}_{\lambda}vv^{T}\right]=\nabla\mathcal{L}_{\lambda}I=\nabla\mathcal{L}_{\lambda}$.

Putting everything together, we estimate the gradient $\nabla_{\alpha}\mathcal{L}_{\lambda}$
by 
\begin{align*}
 & \left\Vert x_{T}^{(\eta)}-y_{T}\right\Vert ^{2}\sum_{i=1}^{T\eta^{-1}}\left(B^{k}(i\eta)-p_{k}(i\eta)\right)\log(i\eta+\delta)\\
+ & \left(\nabla^{AD}\left\Vert x_{T}^{(\eta)}-y_{T}\right\Vert ^{2}\right)vv_{\alpha}^{T}\\
+ & \lambda\sum_{i=0}^{T\eta^{-1}}T_{k}p_{k}(i\eta)(1-p_{k}(i\eta))\log(i\eta+\delta)
\end{align*}
where $v$ is a random Gaussian vector of dimension $2(k_{max}-k_{min})$
(that is the same dimension as the $\alpha_{k},\beta_{k}$) and $v_{\alpha}$
is the first half of $v$ which corresponds to the $\alpha$s. For
the second term $\nabla^{AD}\left\Vert x_{T}^{(\eta)}-y_{T}\right\Vert ^{2}$
is the ``automatic differentiation'' gradient which treats the $B^{k}$
as if they were independent of $p_{k}$. Finally note how we use traditional
differentiation for the regularization term, since it does not suffer
from the two aforementioned challenges. Our estimate for the gradient
$\nabla_{\beta}\mathcal{L}_{\lambda}$ is obtained by removing the
$\log(i\eta+\delta)$ terms and replacing $v_{\alpha}$ by $v_{\beta}$
which is the second half of the vector $v$.

\section{Numerical Experiments\label{sec:Numerical-Experiments}}

\textbf{Training: }The experiments are performed on the CelebA dataset
\cite{liu_2015_CelebA}, cropped and rescaled to a size of $64\times64$.
This task was chosen as it matched the relatively limited compute
at our access (two GeForce RTX 2080 Ti).

We train a sequence of UNets $f^{1},f^{2},f^{3},f^{4},f^{5}$ of increasing
sizes, resulting in better and better approximations of the true score.
Our UNets have the following properties:
\begin{itemize}
\item At each level of the UNet, we divide the image dimension by two and
double the number of channels (starting from a certain ``base dimension'').
We have 4 levels so that at the ``bottom'' of the UNet has a $8\times8$
shape.
\item The filters are factored as the composition of a per-channel $3\times3$
convolution followed by a $1\times1$ convolution across channels.
\item There are $L_{1}$ residual layers at the bottom of the UNet, and
$L_{2}$ residual layers at the shallower scales, in both the downscaling
and the upscaling parts.
\item The four different networks have base dimensions $8,16,32,64$, bottom
depths $L_{1}=5,10,20,40$ and intermediate depths $L_{2}=2,3,5,7$
respectively.
\item Each of these networks were first trained separately on the usual
denoising loss, with Adam.
\end{itemize}
Note that it is now common practice to train multiple lower size models
to do hyper-parameter search before training the largest models, so
practitioners might already have access to a set of trained models
with a range of sizes and accuracies. And even if this is not the
case, the computational of cost of training the smaller models is
almost insignificant in comparison to the training cost of the larger
models.

\textbf{Generation:} For image generation, we followed the standard
DDPM procedure with a baseline of 1000 steps with a cosine noise schedule
\cite{nichol_2021_improved_DDPM}. We also applied clipping to the
predicted denoised image \cite{ho2020_DDPM}.

Since we do not have access to the true score we will use the largest
model $f^{5}$ with 1000 steps of generations as our `true generated
sample' and evaluate other generation methods in terms of how different
their generated images are from this true sample (with the same starting
noise $x_{T}$ and SDE noise $Z_{t}$).

For the baseline EM method, we try a range of number of steps: 250,
500, 750, 900, 1000 over the 5 networks $f^{1},\dots,f^{5}$. We can
see how changing the number of steps allows us tradeoff computation
time for a smaller error, and that the error seems to saturate a bit
before 1000 steps. Obviously when we approximate the 1000 steps $f^{5}$
generation with the same network with fewer steps, the error drops
very suddenly to zero as the number of steps approaches 1000, but
this very small error is misleading, because our actual goal is to
approach the generated images with the true score and we used our
largest UNet $f^{5}$ as a proxy for it. We therefore only focus on
errors above $10^{-3}$ as anything below this threshold is overfitting
to $f^{5}$ rather than approaching the true score.

For ML-EM we only used three models $\{f^{1},f^{3},f^{5}\}$. The
probabilities $p_{k}$ were chosen with three strategies: 
\begin{itemize}
\item ``Fixed probs.'' orange crosses: Taking $p_{k}=CT_{k}^{-1}$ is
the simplest method, since the average time computation time (or FLOPs)
$T_{k}$ of $f^{k}$ can easily be computed. As discussed in Section
\ref{sec:ML-EM} this is sufficient to obtain optimal rates. We then
vary $C$ to obtain a range of errors/times. With this method, the
probabilities are constant in time.
\item ``Fixed probs.'' green crosses: From our theory the optimal choice
of $p_{k}$ should be $p_{k}=C_{0}2^{-(1+\frac{\gamma}{2})k}=CT_{k}^{-(\frac{1}{\gamma}+\frac{1}{2})}$.
We estimate $\gamma=2.5$ (see Figure \ref{fig:estimate_gamma}) and
therefore choose $p_{k}=CT^{-0.9}$ over a range of $C$s. We do not
observe any significant differences between the two ``Fixed probs''
methods. 
\item ``Learned coeffs.'', blue dots: We optimize the $\alpha_{k},\beta_{k}$
parameters with 50 steps of SGD (as described in Section \ref{subsec:Adaptive-Method})
with a batch size of 300 and $\lambda=0.1$ for DDPMs and $\lambda=1.0$
for DDIM. We then obtain a range of errors/times by adding a delta
to the constant coefficients $\beta_{k}\leftarrow\beta_{k}+\Delta$
for $\Delta$ ranging from $-3.0$ to $3.0$. This method clearly
outperforms the ``Fixed probs.'' methods.
\end{itemize}
\textbf{GPU batching:} In GPUs, the compute time is typically only
linear in the number of function evaluations if these function evaluations
are batched. To take advantage of this we generate $N=200$ images
simultaneously and share the Bernoulli variables across the batch,
so that we either have to evaluate $f^{k}$ over the whole batch or
not at all, leading to a significant speedup.

However we do not use this trick when learning the $\alpha_{k},\beta_{k}$
with SGD, because we need our approximate gradient to concentrate,
and sharing Bernoullis breaks the independence leading to a higher
variance.

\section{Conclusion}

We introduce the ML-EM method for discretizing SDEs and ODEs that
is especially useful when the drift term lies in Harder than Monte
Carlo (HTMC) regime. This appears to apply to typical applications
of diffusion models, leading to a fourfold speedups in the time and
compute required to generate high quality images. The advantage of
ML-EM over EM should only increase for larger and more complex datasets,
and so one could expect tenfold speedups or more at the kind of scales
that are common in industry. This method can also be used in combination
with other methods for speeding up diffusion models, such as DDIM.

\bibliographystyle{plain}
\bibliography{main}

\pagebreak{}

\appendix

\begin{figure}
\centering
\includegraphics[scale=0.5]{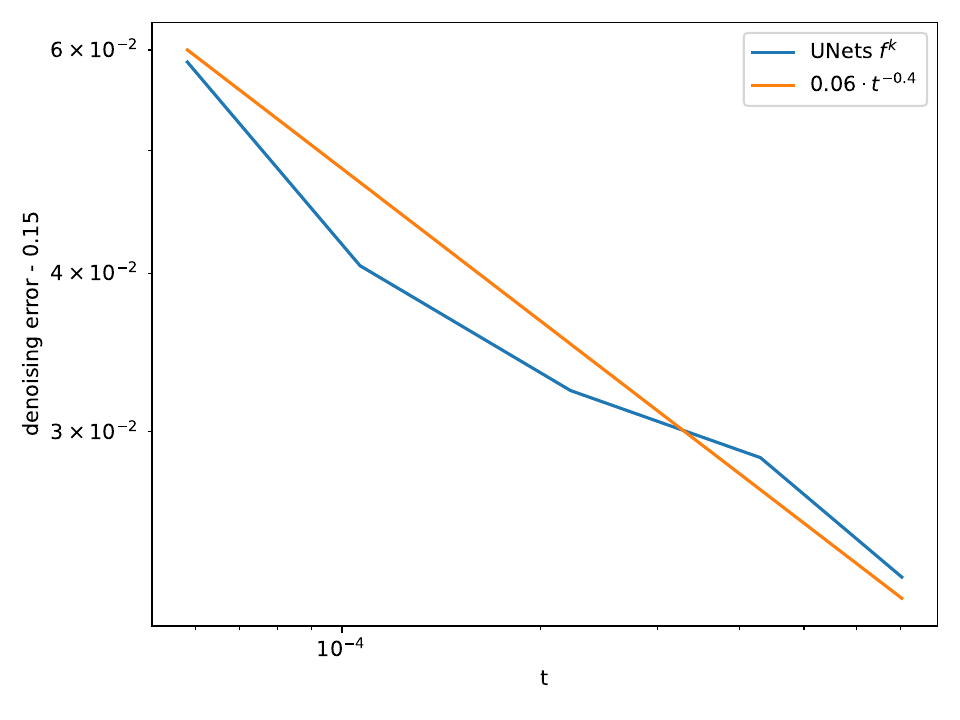}

\caption{\label{fig:estimate_gamma}Estimating $\gamma\approx2.5$: We plot
the denoising error $\epsilon$ minus $0.15$ against the evaluation
time for a range of UNets $f^{1},\dots,f^{5}$. The constant $0.15$
was chosen by hand to approximate the minimal denoising error (it
was chosen so that the set of points would align as closely as possible
to a line in the log-log plot). We see that on a log-log scale the
plot fits well with a $\epsilon\sim t^{-0.4}$ slope, which would
correspond to $\gamma=\frac{1}{0.4}=2.5$, which lies in the HTMC
regime ($\gamma>2$).}
\end{figure}

\section{DDPM/DDIM as approximate Euler-Maruyama methods \label{sec:appendix-comparison}}

In practice DDPMs (and DDIMs) are defined in terms of a sequence of
steps $\beta_{1},\dots,\beta_{M}$ (which are essentially equivalent
to time-dependent step-sizes $\eta_{t}$), which then define a forward
process
\[
y_{m}=\sqrt{1-\beta_{m}}y_{m-1}+\sqrt{\beta_{m}}Z_{m}
\]
for some noise $Z_{i}\sim\mathcal{N}(0,1)$. Defining $\alpha_{m}=1-\beta_{m}$
and $\bar{\alpha}_{m}=\alpha_{1}\cdots\alpha_{m}$, one can easily
prove that $y_{m}$ is Gaussian with
\[
y_{m}\sim\mathcal{N}\left(\sqrt{\bar{\alpha}_{m}}y_{0},(1-\bar{\alpha}_{m})I\right).
\]

We already see an approximation between $y_{m}$ and the continuous
process $x_{t}$ at time $t=\beta_{1}+\dots+\beta_{m}$ since $x_{t}\sim\mathcal{N}(e^{-\frac{t}{2}},(1-e^{-t})I)$
and $\bar{\alpha}_{m}\approx e^{-\beta_{1}-\dots-\beta_{m}}=e^{-t}$.

Given $\sigma_{m}=\sqrt{1-\bar{\alpha}_{m}}$ and $\epsilon_{m}(y_{m})=-\sigma_{m}\nabla\log\rho_{y_{m}}(y_{m})$
for $\rho_{y_{m}}$ the density of $y_{m}$, the DDPM backward process
is defined as
\[
y_{m-1}=\frac{1}{\sqrt{\alpha_{m}}}y_{m}-\frac{\beta_{m}}{\sqrt{\alpha_{m}}\sigma_{m}}\epsilon_{m}+\sqrt{\beta_{m}}\frac{\sigma_{m-1}}{\sigma_{m}}Z_{m},
\]
and the DDIM backward process is defined as
\[
\frac{y_{m-1}}{\sqrt{\bar{\alpha}_{m-1}}}=\frac{y_{m}}{\sqrt{\bar{\alpha}_{m}}}+\left(\sqrt{\frac{1-\bar{\alpha}_{m-1}}{\bar{\alpha}_{m-1}}}-\sqrt{\frac{1-\bar{\alpha}_{m}}{\bar{\alpha}_{m}}}\right)\epsilon_{m}(y_{m}).
\]

For the DDPM equivalence, observe that $\frac{1}{\sqrt{\alpha_{m}}}=\frac{1}{\sqrt{1-\beta_{m}}}=1+\frac{\beta_{m}}{2}+O(\beta_{m}^{2})$,
we then obtain similar approximations $\frac{\beta_{m}}{\sqrt{\alpha_{m}}\sigma_{m}}\approx\frac{\beta_{m}}{\sigma_{m}}$,
similarly $\frac{\sigma_{m-1}}{\sigma_{m}}\approx1$ (up to $O(\beta_{m}^{2})$
terms). 
\[
y_{m-1}-y_{m}\approx\beta_{m}\left[\frac{1}{2}y_{m}+\nabla\log\rho_{y_{m}}(y_{m})\right]+\sqrt{\beta_{m}}Z_{m}.
\]
This implies that the backward DDPM process is approximately equal
to an Euler-Maruyama approximation of the backward SDE $-dy_{t}=\left[\frac{1}{2}y_{t}+\nabla\log\rho_{y_{t}}(y_{t})\right]dt+dW_{t}$
with step-size $\beta_{m}$.

Let us first rewrite the DDIM formula, using the fact that $\bar{\alpha}_{m}=\alpha_{m}\cdot\bar{\alpha}_{m-1}$:
\[
y_{m-1}=\frac{y_{m}}{\sqrt{\alpha_{m}}}+\left(\sqrt{1-\bar{\alpha}_{m-1}}-\sqrt{\alpha_{m}^{-1}-\bar{\alpha}_{m-1}}\right)\epsilon_{m}(y_{m}).
\]
Approximating $\frac{1}{\sqrt{\alpha_{m}}}\approx1+\frac{\beta_{m}}{2}$
and taking a Taylor approximation of the function $\sqrt{x-\bar{\alpha}_{m-1}}$
around $x=1$, we obtain
\begin{align*}
y_{m-1}-y_{m} & \approx\frac{\beta_{m}}{2}y_{m}+\frac{1-\alpha_{m}^{-1}}{2\sqrt{1-\bar{\alpha}_{m-1}}}\epsilon_{m}(y_{m})\\
 & \approx\frac{\beta_{m}}{2}y_{m}+\frac{\beta_{m}}{2}\frac{\sigma_{m}}{\sigma_{m-1}}\nabla\log\rho_{y_{m}}(y_{m})\\
 & \approx\beta_{m}\left[\frac{1}{2}y_{m}+\frac{1}{2}\nabla\log\rho_{y_{m}}(y_{m})\right]
\end{align*}
which is the Euler approximation of the backward ODE $-\frac{dy_{t}}{dt}=\frac{1}{2}y_{t}+\frac{1}{2}\nabla\log\rho_{y_{t}}(y_{t})$.

\section{Proofs}
\begin{thm}
Under Assumptions \ref{assu:scaling_law} and \ref{assu:regularity_boundedness},
for any step size $\eta>0$, error $\epsilon>0$, and time $T=i\eta>0$,
if we choose $k_{min}=-\left\lfloor \log_{2}c\right\rfloor $, $k_{max}=-\left\lfloor \log_{2}\left(\frac{2}{L}e^{L(T+\eta)}\epsilon\right)\right\rfloor $
and $p_{k}=\min\{C2^{-(1+\frac{\gamma}{2})k},1\}$ for some constant
$C$, we have $\mathbb{E}\left\Vert x_{T}^{(\eta)}-y_{T}\right\Vert ^{2}\leq\epsilon^{2}$
at an expected computational cost of at most 
\[
18\left[L^{3}T^{3}+\frac{LT}{2}\right]E_{\gamma}\left(\frac{ce^{L(T+\eta)}}{L\epsilon}\right)
\]
where
\[
E_{\gamma}(r)=\begin{cases}
\frac{1}{(1-2^{\frac{\gamma}{2}-1})^{2}}r^{2} & \gamma<2\\
r^{2}\left(3+\log_{2}r\right) & \gamma=2\\
\frac{2^{3(\gamma-2)}}{\left(2^{\frac{\gamma}{2}-1}-1\right)^{2}}r^{\gamma} & \gamma>2.
\end{cases}
\]
\end{thm}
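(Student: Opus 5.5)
We couple the two discretizations through the same $Z_t$ and track the error $e_t=y_t-x_t^{(\eta)}$. Subtracting the two recursions gives
\[
e_{t+\eta}=e_t+\eta\left[f^{k_{max}}_t(y_t)-f_t(x_t^{(\eta)})\right]+\eta\,\xi_t ,
\]
where $\xi_t=\sum_k\left(\frac{B^k}{p_k}-1\right)\left[f^k_t(y_t)-f^{k-1}_t(y_t)\right]$. Conditionally on $\mathcal F_t$ (which contains $y_t,x_t^{(\eta)}$), $\xi_t$ has mean zero, since $\mathbb E B^k=p_k$. Its conditional variance is $\sum_k\frac{1-p_k}{p_k}\|f^k_t-f^{k-1}_t\|_\infty^2$, because the Bernoullis are independent.

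By Assumption \ref{assu:scaling_law}, $\|f^k-f^{k-1}\|_\infty\le 3\cdot2^{-k}$. This holds also at $k=k_{min}$ with $f^{k_{min}-1}=0$, using $\|f\|_\infty\lesssim 2^{-k_{min}}$. The per-step variance is therefore bounded by $V:=9\sum_{k}\frac{2^{-2k}}{p_k}$. The drift difference splits as $\left[f^{k_{max}}(y)-f(y)\right]+\left[f(y)-f(x)\right]$. The first bracket has norm at most $2^{-k_{max}}$, and the second at most $L\|e_t\|$ by Assumption \ref{assu:regularity_boundedness}.

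Expanding the square and using that the cross term with $\xi_t$ vanishes in conditional expectation, we get
\[
\mathbb E\|e_{t+\eta}\|^2\le \mathbb E\left(\|e_t\|(1+\eta L)+\eta 2^{-k_{max}}\right)^2+\eta^2 V.
\]
Working with $a_t=\sqrt{\mathbb E\|e_t\|^2}$, this becomes $a_{t+\eta}^2\le\left((1+\eta L)a_t+\eta2^{-k_{max}}\right)^2+\eta^2V$. A discrete Grönwall argument, unrolled over $i=T/\eta$ steps starting from $e_0=0$, gives
\[
a_T^2\lesssim e^{2L(T+\eta)}\left(\frac{2^{-2k_{max}}}{L^2}+\eta T V\right).
\]
The bias term is handled by $k_{max}$: by the choice of $k_{max}$ it is at most $\epsilon^2/4$. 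It then remains to make the variance contribution $\eta TVe^{2L(T+\eta)}\le\tfrac34\epsilon^2$, which is done by choosing $C$.

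The noise is injected with an $\eta^2$ factor per step and there are $T/\eta$ steps, so the total noise contributes $\eta T V$. With $p_k=C\,2^{-(1+\gamma/2)k}$ this is
\[
V=\frac{9}{C}\sum_k 2^{(\gamma/2-1)k},
\]
while the expected cost is
\[
\frac{T}{\eta}c^\gamma\sum_kp_k2^{\gamma k}=\frac{T}{\eta}c^\gamma C\sum_k 2^{(\gamma/2-1)k}.
\]
The same geometric sum $S=\sum_{k=k_{min}}^{k_{max}}2^{(\gamma/2-1)k}$ appears in both quantities. We pick $C\propto \eta\,T\,S\,e^{2L(T+\eta)}/\epsilon^2$, so that $C$ is proportional to $\eta$; this is why the cost bound does not depend on $\eta$. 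The cost then becomes of order
\[
\frac{T^2c^\gamma e^{2L(T+\eta)}S^2}{\epsilon^2}.
\]
Since $p_k\le 1$ is enforced by the $\min$, levels with $p_k=1$ contribute zero variance, which only helps.

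Three regimes for $S$ follow. For $\gamma<2$, $S$ is dominated by its $k_{min}$ end and is at most $2^{(\gamma/2-1)k_{min}}/(1-2^{\gamma/2-1})$. For $\gamma=2$, $S=k_{max}-k_{min}+1\approx \log_2 r+O(1)$. For $\gamma>2$, $S$ is dominated by $2^{(\gamma/2-1)k_{max}}/(2^{\gamma/2-1}-1)$. Substituting $2^{-k_{min}}\approx c$ and $2^{-k_{max}}\approx L\epsilon e^{-L(T+\eta)}$ gives $c^\gamma S^2/\epsilon^2\propto r^{\max(\gamma,2)}$, up to the stated prefactors, with $r=ce^{L(T+\eta)}/(L\epsilon)$. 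The floors in $k_{min},k_{max}$ cost factors of $2$, which produce the $2^{3(\gamma-2)}$ constant. The polynomial $L^3T^3+LT/2$ comes from tracking the Grönwall constants, using $(1+\eta L)^{2i}\le e^{2L(T+\eta)}$ and a factor $L$ or $1/L$ arising when the time-sum $\sum(1+\eta L)^{2j}$ is bounded.

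I expect the main obstacle to be exactly this Grönwall bookkeeping: propagating the bias $2^{-k_{max}}$, the Lipschitz amplification, and the martingale variance simultaneously with sharp enough constants to land precisely on $18[L^3T^3+LT/2]$. To handle it, I would first bound $a_t$ by the solution of the linear recursion $a_{t+\eta}\le(1+\eta L)a_t+\eta 2^{-k_{max}}+\eta\sqrt{V}$, possibly after using $(u+v)^2\le(1+\eta L)u^2+(1+1/(\eta L))v^2$, and then sum the geometric series explicitly. The qualitative rate $r^{\max(\gamma,2)}$ (with a log at $\gamma=2$) is robust to how this is done.
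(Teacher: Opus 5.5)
Your route is different from the paper's, and sharper. The paper splits the error into the bias $b_t=\|\mathbb{E}y_t-x_t^{(\eta)}\|$ and the spread $v_t^2=\mathbb{E}\|y_t-\mathbb{E}y_t\|^2$, and runs two coupled linear recursions:
$b_{t+\eta}\le(1+\eta L)b_t+\eta Lv_t+\eta 2^{-k_{max}}$ and $v_{t+\eta}^2\le(1+\eta L)^2v_t^2+9\eta^2\sum_k 2^{-2k}/p_k$.
Because $\mathbb{E}f(y_t)\neq f(\mathbb{E}y_t)$, the spread leaks into the bias through $\eta Lv_t$. Summing that leak produces the $L(i\eta)^2$ term in the paper's error bound, and hence the $L^3T^3$ in the cost. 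The paper's advantage is that each recursion unrolls directly, with no filtration argument.

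You keep the whole error $e_t$ in $L^2$ and use that $\eta\xi_t$ is a martingale increment orthogonal to everything $\mathcal F_t$-measurable. So the Bernoulli fluctuations add in quadrature and never pass through the Lipschitz term. That step is correct and removes the coupling. You therefore do not need to land precisely on $18[L^3T^3+\frac{LT}{2}]$; you can land below it.

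The gap is the unrolling of $a_{t+\eta}^2\le\bigl((1+\eta L)a_t+\eta 2^{-k_{max}}\bigr)^2+\eta^2V$. Both devices you propose for it fail.

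\textbf{Linear recursion.} $a_{t+\eta}\le(1+\eta L)a_t+\eta 2^{-k_{max}}+\eta\sqrt V$ adds the noise amplitudes linearly. The noise then contributes $\frac{e^{LT}-1}{L}\sqrt V$ (at least $T\sqrt V$) instead of order $\sqrt{\eta TV}$. The factor $\eta$ is lost, $C$ no longer scales with $\eta$, and the cost bound $\frac{T}{\eta}c^\gamma CS$ grows like $1/\eta$. So this is not a matter of constants: it destroys exactly the $\eta$-independence you traced to $C\propto\eta$.

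\textbf{Young variant.} Applied to $u=(1+\eta L)a_t$, $v=\eta 2^{-k_{max}}$, it keeps the $\eta$ but multiplies $a_t^2$ by $(1+\eta L)^3$ per step, i.e.\ $e^{3LT}$ overall. The prescribed $k_{max}$ and the $e^{L(T+\eta)}$ inside $r$ cannot absorb that.

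\textbf{What works: a superposition induction.} With $\lambda=1+\eta L$, set $\alpha_0=\beta_0=0$, $\alpha_{n+1}=\lambda\alpha_n+\eta 2^{-k_{max}}$ and $\beta_{n+1}^2=\lambda^2\beta_n^2+\eta^2V$. If $a_n\le\alpha_n+\beta_n$, then
$a_{n+1}^2\le(\alpha_{n+1}+\lambda\beta_n)^2+\eta^2V\le(\alpha_{n+1}+\beta_{n+1})^2$,
using only $\lambda\beta_n\le\beta_{n+1}$. This gives $\alpha_i\le\frac1L e^{LT}2^{-k_{max}}\le\frac\epsilon2$ (with $k_{max}$ as in the paper's proof, see below) and $\beta_i^2\le\frac{\eta V}{2L}(e^{2LT}-1)$.

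The conclusion is $a_T\le\alpha_T+\beta_T$, not a sum of squares. So your $\frac14+\frac34$ split of $\epsilon^2$ must become $\beta_T\le\frac\epsilon2$, which holds for $C=\frac{18\eta}{L}(e^{2LT}-1)S\epsilon^{-2}$. The cost is then $\frac{18T}{L}(e^{2LT}-1)c^\gamma S^2\epsilon^{-2}$. This is below the paper's $18[LT^3+\frac{T}{2L}]e^{2L(T+\eta)}c^\gamma S^2\epsilon^{-2}$, because $1-e^{-2u}<u^2+\frac12$ for all $u\ge0$. Your three regimes for $S$ then finish the argument as in the paper.

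Two points you glossed over are shared with the paper.
\begin{itemize}
\item Your substitution $2^{-k_{max}}\approx L\epsilon e^{-L(T+\eta)}$ is the proof's choice $k_{max}=-\lfloor\log_2(\frac L2e^{-L(T+\eta)}\epsilon)\rfloor$. The argument $\frac2Le^{L(T+\eta)}\epsilon$ in the statement is inverted and would not make the bias small.
\item At $\gamma=2$, your $S^2$ with $S=k_{max}-k_{min}+1\le 3+\log_2 r$ gives $r^2(3+\log_2 r)^2$, not the single logarithm in $E_2$. The paper's last display drops the same square, so neither argument establishes that case as stated.
\end{itemize}
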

\begin{proof}
As a reminder, here is the formula for the MLMC-EM method

\[
y_{t+\eta}=y_{t}+\eta\sum_{k=k_{min}}^{k_{max}}\frac{B_{k}(t)}{p_{k}}\left[f_{t}^{k}(y_{t})-f_{t}^{k-1}(y_{t})\right]+\sqrt{\eta}\sigma_{t}Z_{t}.
\]
And note that since we chose $k_{min}=-\left\lfloor \log_{2}c\right\rfloor <-\log_{2}c+1$,
the estimator $f_{t}^{k_{min}-1}$ must have compute bounded by $c^{\gamma}2^{\gamma(k_{min}-1)}<1$
and therefore we take it to be the constant $0$ function.

We will track the evolution of the error in time (with the usual Grönwall's
Lemma strategy), splitting the error into a bias term $b_{t}=\left\Vert \mathbb{E}y_{t}-x_{t}^{(\eta)}\right\Vert $
and $v_{t}^{2}=\mathbb{E}\left\Vert y_{t}-\mathbb{E}y_{t}\right\Vert ^{2}$
where the expectation averages over the sampling of the $B_{k}(t)$,
not the $Z_{t}$ which we assume to be fixed (in other terms our analysis
conditions on the $Z_{t}$, which are shared between $y_{t}$ and
$x_{t}^{(\eta)}$). First we note that $b_{t+\eta}$ can be bounded
in terms of $b_{t}$ and $v_{t}$ 
\begin{align*}
b_{t+\eta} & =\left\Vert \mathbb{E}\left[y_{t}+\eta\sum_{k=k_{min}}^{k_{max}}\frac{B_{k}(t)}{p_{k}}\left[f_{t}^{k}(y_{t})-f_{t}^{k-1}(y_{t})\right]+\sqrt{\eta}\sigma_{t}Z_{t}\right]-\left(x_{t}^{(\eta)}+\eta f_{t}(x_{t}^{(\eta)})+\sqrt{\eta}\sigma_{t}Z_{t}\right)\right\Vert \\
 & =\left\Vert \mathbb{E}y_{t}-x_{t}^{(\eta)}+\eta\left(\mathbb{E}f^{k_{max}}(y_{t})-f(x_{t}^{(\eta)})\right)\right\Vert \\
 & \leq\left\Vert \mathbb{E}y_{t}-x_{t}^{(\eta)}\right\Vert +\eta\left\Vert \mathbb{E}f^{k_{max}}(y_{t})-\mathbb{E}f^{k}(y_{t})\right\Vert +\eta\left\Vert \mathbb{E}f(y_{t})-f(x_{t}^{(\eta)})\right\Vert \\
 & \leq b_{t}+\eta2^{-k_{max}}+\eta L\sqrt{b_{t}^{2}+v_{t}^{2}}\\
 & \leq(1+\eta L)b_{t}+\eta Lv_{t}+\eta2^{-k_{max}},
\end{align*}
where we used $\sqrt{b_{t}^{2}+v_{t}^{2}}\leq b_{t}+v_{t}$ in the
last inequality. 

On the other hand $v_{t+\eta}$ can be bounded in terms of $v_{t}$,
by relying on the conditional variance formula, conditioning on $y_{t}$:
\begin{align*}
v_{t+\eta}^{2} & =\mathbb{E}\left\Vert y_{t+\eta}-\mathbb{E}\left[y_{t+\eta}|y_{t}\right]\right\Vert ^{2}+\mathbb{E}\left\Vert \mathbb{E}\left[y_{t+\eta}|y_{t}\right]-\mathbb{E}y_{t+\eta}\right\Vert ^{2}\\
 & =\eta^{2}\sum_{k=k_{min}}^{k_{max}}\frac{1}{p_{k}}\mathbb{E}\left\Vert f_{t}^{k}(y_{t})-f_{t}^{k-1}(y_{t})\right\Vert ^{2}+\mathbb{E}\left\Vert y_{t}+\eta f^{k_{max}}(y_{t})-\mathbb{E}\left[y_{t}+\eta f^{k_{max}}(y_{t})\right]\right\Vert ^{2}\\
 & \leq9\eta^{2}\sum_{k=k_{min}}^{k_{max}}\frac{2^{-2k}}{p_{k}}+(1+\eta L)^{2}v_{t}^{2},
\end{align*}
where we used the fact that 
\[
\left\Vert f_{t}^{k}(x)-f_{t}^{k-1}(x)\right\Vert \leq\left\Vert f_{t}^{k}(x)-f_{t}(x)\right\Vert +\left\Vert f_{t}(x)-f_{t}^{k-1}(x)\right\Vert \leq2^{-k}+2^{-k+1}=3\cdot2^{-k}
\]
 and $\mathrm{Var}(g(X))\leq Lip(g)^{2}\mathrm{Var}(X)$ for the last
inequality.

We can unroll the recursive bound for $v_{t}$ into a direct bound
\begin{align*}
v_{i\eta}^{2} & \leq9\eta^{2}\sum_{j=0}^{i}(1+\eta L)^{2(i-j)}\sum_{k=k_{min}}^{k_{max}}\frac{2^{-2k}}{p_{k}}\\
 & \leq9\eta^{2}\frac{(1+\eta L)^{2i}}{1-(1+\eta L)^{-2}}\sum_{k=k_{min}}^{k_{max}}\frac{2^{-2k}}{p_{k}}\\
 & \leq\frac{9\eta}{2L}(1+\eta L)^{2(i+1)}\sum_{k=k_{min}}^{k_{max}}\frac{2^{-2k}}{p_{k}}\\
 & \leq\frac{9\eta}{2L}e^{2L(i+1)\eta}\sum_{k=k_{min}}^{k_{max}}\frac{2^{-2k}}{p_{k}}
\end{align*}
where we used
\begin{align*}
\frac{1}{1-(1+\eta L)^{-2}} & =\frac{(1+\eta L)^{2}}{(1+\eta L)^{2}-1}\leq\frac{(1+\eta L)^{2}}{2\eta L}.
\end{align*}
This in turn allows us to bound the bias term \textbf{$b_{t}$} directly
\begin{align*}
b_{i\eta} & \leq\eta L\sum_{j=0}^{i}(1+\eta L)^{i-j}v_{j}+\eta2^{-k_{max}}\sum_{j=0}^{i}(1+\eta L)^{i-j}\\
 & \leq\frac{3\sqrt{L}}{\sqrt{2}}\sqrt{1+2\eta L}\eta^{\frac{3}{2}}Li(1+\eta L)^{i}\sqrt{\sum_{k=k_{min}}^{k_{max}}\frac{2^{-2k}}{p_{k}}}+\eta\frac{(1+\eta L)^{i}}{1-(1+\eta L)^{-1}}2^{-k_{max}}\\
 & \leq\frac{3\sqrt{L}}{\sqrt{2}}\sqrt{\eta}(i\eta)e^{L(i+1)\eta}\sqrt{\sum_{k=k_{min}}^{k_{max}}\frac{2^{-2k}}{p_{k}}}+\frac{1}{L}e^{L(i+1)\eta}2^{-k_{max}}.
\end{align*}
To reach an $\epsilon$ error, we choose $k_{max}=-\left\lfloor \log_{2}\left(\frac{L}{2}e^{-L(i+1)\eta}\epsilon\right)\right\rfloor $
so that $\frac{1}{L}e^{L(i+1)\eta}2^{-k_{max}}\leq\frac{\epsilon}{2}$.
The $p_{k}$s are chosen as that $p_{k}=\min\{C2^{-(1+\frac{\gamma}{2})k},1\}$
for some constant $C$ so as to minimize both the sum $\sum_{k=k_{min}}^{k_{max}}\frac{2^{-2k}}{p_{k}}$
and the computational cost $c\sum_{k=k_{min}}^{k_{max}}p_{k}2^{\gamma k}$.
We will then choose a sufficiently large constant $C$ to guarantee
an $\epsilon$ error. 

Using the identity $(a+b)^{2}\leq2a^{2}+2b^{2}$ and $(1+\eta L)^{i}\leq e^{Li\eta}$,
we simplify the total expected squared error:
\begin{align*}
b_{i\eta}^{2}+v_{i\eta}^{2} & \leq\left(\frac{3\sqrt{L}}{\sqrt{2}}\sqrt{\eta}(i\eta)e^{L(i+1)\eta}C^{-\frac{1}{2}}\sqrt{\sum_{k=k_{min}}^{k_{max}}2^{(\frac{\gamma}{2}-1)k}}+\frac{\epsilon}{2}\right)^{2}+\frac{9\eta}{2L}e^{2L(i+1)\eta}C^{-1}\sum_{k=k_{min}}^{k_{max}}2^{(\frac{\gamma}{2}-1)k}\\
 & \leq9\eta\left[L(i\eta)^{2}+\frac{1}{2L}\right]e^{2L(i+1)\eta}C^{-1}\sum_{k=k_{min}}^{k_{max}}2^{(\frac{\gamma}{2}-1)k}+\frac{\epsilon^{2}}{2}.
\end{align*}
We therefore choose
\[
C=18\eta\left[L(i\eta)^{2}+\frac{1}{2L}\right]e^{2L(i+1)\eta}\sum_{k=k_{min}}^{k_{max}}2^{(\frac{\gamma}{2}-1)k}\epsilon^{-2}
\]
to obtain an expected squared error of $\epsilon^{2}$ at a computational
cost of at most
\begin{align*}
i\sum_{k=k_{min}}^{k_{max}}p_{k}c2^{\gamma k} & \leq iC\sum_{k=k_{min}}^{k_{max}}c2^{(\frac{\gamma}{2}-1)k}\\
 & =18\left[L(i\eta)^{3}+\frac{i\eta}{2L}\right]e^{2L(i+1)\eta}\left(\sum_{k=k_{min}}^{k_{max}}2^{(\frac{\gamma}{2}-1)k}\right)^{2}c\epsilon^{-2}
\end{align*}

The geometric sum $\sum_{k=k_{min}}^{k_{max}}2^{(\frac{\gamma}{2}-1)k}$
can be bounded in three cases:
\begin{align*}
\sum_{k=k_{min}}^{k_{max}}2^{(\frac{\gamma}{2}-1)k} & \leq\begin{cases}
\frac{1}{1-2^{\frac{\gamma}{2}-1}}2^{(\frac{\gamma}{2}-1)k_{min}} & \gamma<2\\
(k_{max}+1-k_{min}) & \gamma=2\\
\frac{2^{\frac{\gamma}{2}-1}}{2^{\frac{\gamma}{2}-1}-1}2^{(\frac{\gamma}{2}-1)k_{max}} & \gamma>2
\end{cases}\\
 & \leq\begin{cases}
\frac{1}{1-2^{\frac{\gamma}{2}-1}}c^{\frac{1}{\gamma}-\frac{1}{2}} & \gamma<2\\
\log_{2}\left(8\frac{c^{\frac{1}{\gamma}}e^{L(T+\eta)}}{L\epsilon}\right) & \gamma=2\\
\frac{2^{\gamma-2}}{2^{\frac{\gamma}{2}-1}-1}\left(\frac{L}{2}e^{-L(i+1)\eta}\epsilon\right)^{-(\frac{\gamma}{2}-1)} & \gamma>2.
\end{cases}
\end{align*}
This leads to the computational bound
\[
\sum_{k=k_{min}}^{k_{max}}p_{k}c2^{\gamma k}\leq18\left[(Li\eta)^{3}+\frac{Li\eta}{2}\right]\begin{cases}
\frac{1}{(1-2^{\frac{\gamma}{2}-1})^{2}}\left(\frac{c^{\frac{1}{\gamma}}e^{L(i+1)\eta}}{L\epsilon}\right)^{2} & \gamma<2\\
\left(\frac{c^{\frac{1}{\gamma}}e^{L(i+1)\eta}}{L\epsilon}\right)^{2}\log_{2}\left(8\frac{c^{\frac{1}{\gamma}}e^{L(i+1)\eta}}{L\epsilon}\right) & \gamma=2\\
\frac{2^{3(\gamma-2)}}{\left(2^{\frac{\gamma}{2}-1}-1\right)^{2}}\left(\frac{c^{\frac{1}{\gamma}}e^{L(i+1)\eta}}{L\epsilon}\right)^{\gamma} & \gamma>2.
\end{cases}
\]
\end{proof}

\end{document}